\documentclass[twocolumn]{article}

\title{
Equivalence of Equilibrium Propagation\\
and Recurrent Backpropagation
}
\author{
  Benjamin Scellier and Yoshua Bengio\footnote{Y.B. is also a Senior Fellow of CIFAR}\\
  MILA, Universit\'{e} de Montr\'{e}al
}


\usepackage[T1]{fontenc}      
\usepackage[utf8]{inputenc} 
\usepackage[english]{babel}

\usepackage{amssymb,amsthm,amsfonts,amscd}
\usepackage{amsmath}
\usepackage{mathtools}   
\usepackage{times}
\usepackage{textcomp}
\usepackage{fancyhdr}    

\usepackage[margin=3cm]{caption}
\usepackage{makeidx} 
\usepackage{graphicx}
\usepackage{algorithm}
\usepackage{algorithmic}
\usepackage{natbib}
\usepackage{caption}
\graphicspath{{.},{img/}}

\newcommand \x{\mathrm x}
\newcommand \y{\mathrm y}
\newcommand \s{\mathrm s}

\newcommand{\norm}[1]{\left\lVert #1\right \rVert} 

\newtheorem{thm}{Theorem}

\makeindex

\usepackage{vmargin}
\setlength{\parindent}{0cm}                                      
\setmarginsrb{2cm}{2cm}{2cm}{3cm}{0cm}{0.8cm}{0cm}{1cm}

\begin{document}

\maketitle

\abstract{
Recurrent Backpropagation and Equilibrium Propagation are supervised learning algorithms for fixed point recurrent neural networks
which differ in their second phase.
In the first phase, both algorithms converge to a fixed point which corresponds to the configuration where the prediction is made.
In the second phase, Equilibrium Propagation relaxes to another nearby fixed point corresponding to smaller prediction error,
whereas Recurrent Backpropagation uses a side network to compute error derivatives iteratively.

In this work we establish a close connection between these two algorithms.
We show that, at every moment in the second phase, the temporal derivatives of the neural activities in Equilibrium Propagation
are equal to the error derivatives computed iteratively by Recurrent Backpropagation in the side network.
This work shows that it is not required to have a side network for the computation of error derivatives,
and supports the hypothesis that, in biological neural networks, temporal derivatives of neural activities may code for error signals.
}


\section{Introduction}

In Deep Learning, the backpropagation algorithm used to train neural networks requires a side network for the propagation of error derivatives,
which is widely seen as biologically implausible \citep{crick-nature1989}.
One fascinating hypothesis, first formulated by \citet{Hinton+McClelland-1988}, is that,
in biological neural networks, error signals could be encoded in the temporal derivatives of the neural activities.
This allows for error signals to be propagated in the network via the neuronal dynamics itself without the need for a side network.
Neural computation would correspond to both inference and error back-propagation.
The work presented in this paper also supports this hypothesis.

In section \ref{sec:prior-work}, we present the machine learning setting we are interested in.
The neurons of the network follow the gradient of an energy function, such as the Hopfield energy \citep{cohen1983absolute,Hopfield84}.
Energy minima correspond to preferred states of the model.
At prediction time, inputs are clamped and
the network relaxes to a fixed point, corresponding to a local minimum of the energy function.
The prediction is then read out on the output neurons.
This corresponds to the first phase of the algorithm.
The goal of learning is that of minimizing the cost at the fixed point, called objective.

Section \ref{sec:rec-backprop} presents \textit{Recurrent Backpropagation} \citep{Almeida87,Pineda87},
an algorithm which computes the gradient of the objective.
In the second phase of Recurrent Backpropagation, an iterative procedure computes error derivatives.

In section \ref{sec:equi-prop} we present \textit{Equilibrium Propagation} \citep{Scellier+Bengio-frontiers2017},
another algorithm which computes the gradient of the objective.
In the second phase of Equilibrium Propagation
when the target values for output neurons are observed,
the output neurons are nudged
towards their targets and the network starts a second relaxation phase towards a second but nearby
fixed point which corresponds to slightly smaller prediction error.
The gradient of the objective
can be computed based on a contrastive Hebbian learning rule at the first fixed point and second fixed point.

Section \ref{sec:coding} (in particular Theorem \ref{thm:error-derivatives}) constitutes the main contribution of our work.
We establish a close connection between Recurrent Backpropagation and Equilibrium Propagation.
We show that at every moment in the second phase of Equilibrium Propagation,
the temporal derivative of the neural activities \textit{code} (i.e. are equal to)
intermediate error derivatives which Recurrent Backpropagation computes iteratively.
Our work shows that one does not require a special computational path for the computation of the error derivatives in the second phase -
the same information is available in the temporal derivatives of the neural activities.
Furthermore we show that, in Equilibrium Propagation, halting the second phase before convergence to the second fixed point is equivalent to \textit{Truncated Recurrent Backpropagation}.


\section{Machine Learning Setting}
\label{sec:prior-work}

We consider the supervised setting in which we want to predict a \textit{target} $\y$ given an \textit{input} $\x$.
The pair $(\x,\y)$ is a data point.
The model is a network specified by a \textit{state variable} $s$ and a \textit{parameter variable} $\theta$.
The dynamics of the network are determined by two differentiable scalar functions $E_\theta(\x,s)$ and $C_\theta(\y,s)$
which we call \textit{energy function} and \textit{cost function} respectively.
In most of the paper, to simplify the notations we omit the dependence on $\x$ and $\y$ and simply write $E_\theta(s)$ and $C_\theta(s)$.
Furthermore we write $\frac{\partial E_\theta}{\partial \theta}(s)$ and $\frac{\partial E_\theta}{\partial s}(s)$ the partial derivatives of $(\theta,s) \mapsto E_\theta(s)$ with respect to $\theta$ and $s$, respectively.
Similarly  $\frac{\partial C_\theta}{\partial \theta}(s)$ and $\frac{\partial C_\theta}{\partial s}(s)$ denote the partial derivatives of $(\theta,s) \mapsto C_\theta(s)$.

The state variable $s$ is assumed to move spontaneously towards low-energy configurations by following the gradient of the energy function:
\begin{equation}
	\label{eq:free-dynamics}
	\frac{ds}{dt} = -\frac{\partial E_\theta}{\partial s}(s).
\end{equation}
The state $s$ eventually settles to a minimum of the energy function, written $s_\theta^0$ and characterized by
\footnote{In general, the fixed point defined by Eq.~\ref{eq:free-fixed-point} is not unique, unless further assumptions are made on $E_\theta(s)$ (e.g. convexity). The fixed point depends on the initial state of the dynamics (Eq.~\ref{eq:free-dynamics}), and so does the objective function of Eq.~\ref{eq:objective}. However, for ease of presentation, we shall avoid delving into these mathematical details here.
}
\begin{equation}
	\label{eq:free-fixed-point}
	\frac{\partial E_\theta}{\partial s} \left( s_\theta^0 \right) = 0.
\end{equation}
Since the dynamics in Eq.~\ref{eq:free-dynamics} only depends on the input $\x$ (through $E_\theta(\x,s)$) but not on the target $\y$,
we call this relaxation phase the \textit{free phase},
and the energy minimum $s_\theta^0$ is called \textit{free fixed point}.

The goal of learning is that of finding $\theta$ such that the cost at the fixed point $C_\theta \left( s_\theta^0 \right)$ is minimal.
\footnote{In this expression, both the cost function $C_\theta(s)$ and the fixed point $s_\theta^0$ depend on $\theta$.
$C_\theta(s)$ directly depends on $\theta$, whereas $s_\theta^0$ indirectly depends on $\theta$ through $E_\theta(s)$ (see Eq.~\ref{eq:free-fixed-point}).}
We introduce the \textit{objective function} (for a single data point $(\x,\y)$)
\begin{equation}
	\label{eq:objective}
	J(\theta) := C_\theta \left( s_\theta^0 \right).
\end{equation}
Note the distinction between the cost function and the objective function:
the cost function $C_\theta(s)$ is defined for any state $s$ whereas the objective function $J(\theta)$ is the cost at the fixed point.

Several methods have been proposed to compute the gradient of $J$ with respect to $\theta$.
Early work by \citet{Almeida87,Pineda87} have introduced an algorithm called Recurrent Backpropagation,
which we present in section \ref{sec:rec-backprop}.
In \citet{Scellier+Bengio-frontiers2017} we proposed another algorithm - at first sight very different.
We present it in section \ref{sec:equi-prop}.
In section \ref{sec:coding} we will show that there is actually a profound connection between these two algorithms.


\subsection{Example: Hopfield Model}
\label{sec:hopfield}

In this subsection we propose particular forms for the energy function $E_\theta(\x,s)$ and the cost function $C_\theta(\y,s)$ to ease understanding.
Nevertheless, the theory presented in this paper is general and does not rely on the particular forms of the functions $E_\theta(\x,s)$ and $C_\theta(\y,s)$ chosen here.



Recall that we consider the supervised setting where we must predict a target $\y$ given an input $\x$.
To illustrate the idea we consider the case where the neurons of the network are split in layers $s_0$, $s_1$ and $s_2$, as in Figure \ref{fig:network}.
\footnote{We choose to number the layers in increasing order from output to input, in the sense of propagation of error signals (see section \ref{sec:equi-prop}).}
In this setting, the state variable $s$ is the set of layers $s = \left( s_0,s_1,s_2 \right)$.
Each of the layers of neurons $s_0$, $s_1$ and $s_2$ is a vector whose coordinates are real numbers representing the membrane voltages of the neurons.
The output layer $s_0$ corresponds to the layer where the prediction is read and has the same dimension as the target $\y$.
Furthermore $\rho$ is a deterministic function (nonlinear activation) which maps a neuron's voltage onto its firing rate.
We commit a small abuse of notation and denote $\rho(s_i)$ the vector of firing rates of the neurons in layer $s_i$ ;
here the function $\rho$ is applied elementwise to the coordinates of the vector $s_i$.
Therefore the vector $\rho(s_i)$ has the same dimension as $s_i$.
Finally, the parameter variable $\theta$ is the set of (bidirectional) weight matrices between the layers $\theta = \left( W_{01},W_{12},W_{23} \right)$.

We consider the following modified Hopfield energy function:
\begin{align}
	\label{eq:hopfield-energy}
	\begin{split}
	E_\theta(\x,s) = & \frac{1}{2} \left( \norm{s_0}^2 + \norm{s_1}^2 + \norm{s_2}^2 \right) \\
	                 & - \rho(s_0)^T \cdot W_{01} \cdot \rho(s_1) \\
	                 & - \rho(s_1)^T \cdot W_{12} \cdot \rho(s_2) \\
	                 & - \rho(s_2)^T \cdot W_{23} \cdot \rho(\x).
	\end{split}
\end{align}
With this choice of energy function, the dynamics (Eq.~\ref{eq:free-dynamics}) translate into a form of leaky integration neural dynamics with symmetric connections:
\begin{align}
	\frac{ds_0}{dt} & = \rho'(s_0)^T \otimes W_{01} \cdot \rho(s_1) - s_0,                          \\
	\frac{ds_1}{dt} & = \rho'(s_1)^T \otimes \left( W_{12} \cdot \rho(s_2) + W_{01}^T \cdot \rho(s_0) \right) - s_1, \\
	\frac{ds_2}{dt} & = \rho'(s_2)^T \otimes \left( W_{23} \cdot \rho(\x ) + W_{12}^T \cdot \rho(s_1) \right) - s_2.
\end{align}
Here again the derivative of the function $\rho$ (denoted $\rho'$) is applied elementwise to the coordinates of the vectors $s_0$, $s_1$ and $s_2$,
and the notation $\otimes$ is used to mean element-wise multiplication.
\footnote{Given two vectors $a = \left( a_1,\ldots,a_n \right)$ and $b = \left( b_1,\ldots,b_n \right)$,
their product element by element is $a \otimes b = \left( a_1 b_1,\ldots,a_n b_n \right)$.}

\begin{figure}[h]
	\centering
	\captionsetup{width=.8\linewidth}
	\includegraphics[width=.6\linewidth]{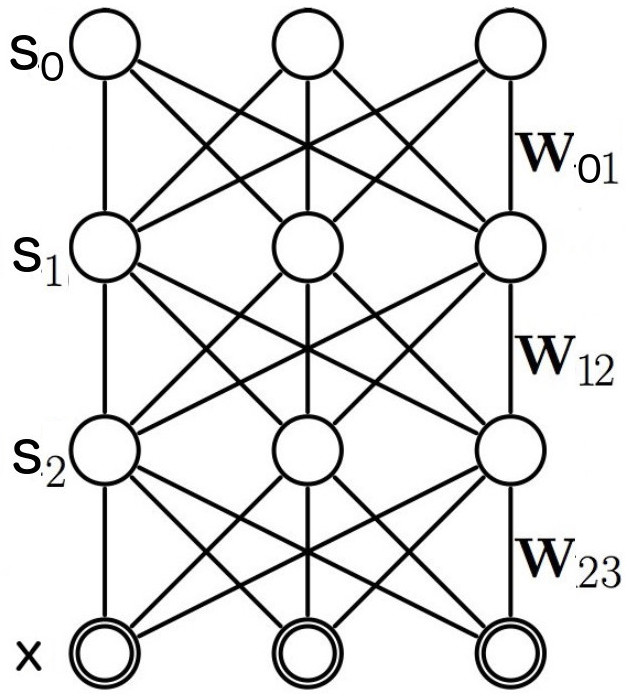}
	\caption{Graph of the network.
	Input $\x$ is clamped.
	State variable $s$ includes hidden layers $s_2$ and $s_1$,
	and output layer $s_0$ (layer where the prediction is read).
	Output layer $s_0$ has the same dimension as target $\y$.}
	\label{fig:network}
\end{figure}

Finally we consider the quadratic cost function
\begin{equation}
	\label{eq:quadratic-cost}
	C_\theta(\y,s) = \frac{1}{2} \norm{\y-s_0}^2,
\end{equation}
which measures the discrepancy between the output layer $s_0$ and the target $\y$.

Note that the results established in this paper hold for any energy function $E_\theta(s)$ and any cost function $C_\theta(s)$,
and are not limited to the Hopfield energy and the quadratic cost (Eq.~\ref{eq:hopfield-energy} and Eq.~\ref{eq:quadratic-cost}).


\section{Recurrent Back-Propagation}
\label{sec:rec-backprop}

In this section we present \textit{Recurrent Backpropagation}, an algorithm introduced by \citet{Almeida87,Pineda87}
which computes the gradient of $J$ (Eq.~\ref{eq:objective}).
The original algorithm was described in the discrete-time setting and for a general state-to-state dynamics.
Here we present it in the continuous-time setting in the particular case of a gradient dynamics (Eq.~\ref{eq:free-dynamics}).
A direct derivation based on the adjoint method can also be found in \citet{lecun1988theoretical}.


\subsection{Projected Cost Function}

Let $S_\theta^0(\s,t)$ denote the state of the network at time $t \geq 0$
when it starts from an initial state $\s$ at time $t=0$ and follows the free dynamics Eq.~\ref{eq:free-dynamics}.
In the theory of dynamical systems $S_\theta^0(\s,t)$ is called the \textit{flow}.
We introduce the \textit{projected cost function}
\begin{equation}
	\label{eq:projected-cost}
	L_\theta(\s,t) := C_\theta \left( S_\theta^0(\s,t) \right).
\end{equation}
This is the cost of the state projected a duration $t$ in the future, when the networks starts from $\s$ and follows the free dynamics.
Two particular cases of importance:
\begin{itemize}
	\item for $t=0$, the projected cost is simply the cost of the current state $L_\theta(\s,0) = C_\theta \left( \s \right)$,
	\item as $t \to \infty$ the projected cost converges to the objective $L_\theta(\s,t) \to J(\theta)$.
\end{itemize}
The second property comes from the fact that the dynamics converges to the fixed point $S_\theta^0(\s,t) \to s_\theta^0$ as $t \to \infty$.
More generally, for fixed $\s$ the process $\left( L_\theta(\s,t) \right)_{t \geq 0}$ represents the successive cost values taken by the state of the network along the free dynamics when it starts from the initial state $\s$.
Under mild regularity conditions on $E_\theta(s)$ and $C_\theta(s)$,
the gradient of the projected cost function converges to the gradient of the objective function in the limit of infinite duration:
\begin{equation}
	\label{eq:gradient-projected-cost}
	\frac{\partial L_\theta}{\partial \theta}(\s,t) \to \frac{\partial J}{\partial \theta}(\theta)
\end{equation}
as $t \to \infty$.
Therefore, if we can compute $\frac{\partial L_\theta}{\partial \theta}(\s,t)$ for a particular value of $\s$ and for any $t \geq 0$,
we can obtain the desired gradient $\frac{\partial J}{\partial \theta}(\theta)$ by letting $t \to \infty$.
We will show next
that this is what Recurrent Backpropagation does in the case where the initial state $\s$ is the fixed point $s_\theta^0$.


\subsection{Process of Error Derivatives}

We introduce the \textit{process of error derivatives} $(\overline{S}_t,\overline{\Theta}_t)_{t \geq 0}$, defined as
\begin{align}
	\overline{S}_t      & := \frac{\partial L_\theta}{\partial s}      \left( s_\theta^0,t \right), \qquad t \geq 0, \label{eq:error-derivative} \\
	\overline{\Theta}_t & := \frac{\partial L_\theta}{\partial \theta} \left( s_\theta^0,t \right), \qquad t \geq 0.
\end{align}
The process $\overline{S}_t$      takes values in the state     space (space of the state variable $s$) and
the process $\overline{\Theta}_t$ takes values in the parameter space (space of the parameter variable $\theta$).
\footnote{The quantity $\overline{\Theta}_t = \frac{\partial L_\theta}{\partial \theta}\left( s_\theta^0,t \right)$ represents the partial derivative of $L_\theta(\s,t)$ with respect to $\theta$, evaluated at the fixed point $\s=s_\theta^0$.
This does not include the differentiation path through the fixed point $s_\theta^0$}
As shown in Theorem \ref{thm:rec-backprop} below, the Recurrent Backpropagation algorithm computes $\overline{S}_t$ and $\overline{\Theta}_t$ iteratively for increasing values of $t$.
As a consequence of Eq.~\ref{eq:gradient-projected-cost}, we obtain the desired gradient $\frac{\partial J}{\partial \theta}(\theta)$ in the limit $t \to \infty$, i.e.
\begin{equation}
	\label{eq:limit}
	\overline{\Theta}_t \to \frac{\partial J}{\partial \theta}(\theta).
\end{equation}

\begin{thm}[Recurrent Backpropagation]
	\label{thm:rec-backprop}
	The process of error derivatives $(\overline{S}_t,\overline{\Theta}_t)$ satisfies
	\begin{align}
		\label{eq:Cauchy-1}
		\overline{S}_0 & = \frac{\partial C_\theta}{\partial s} \left( s_\theta^0 \right),      \\
		\label{eq:Cauchy-2}
		\overline{\Theta}_0 & = \frac{\partial C_\theta}{\partial \theta} \left( s_\theta^0 \right), \\
		\label{eq:Cauchy-3}
		\frac{d}{dt} \overline{S}_t & = - \frac{\partial^2 E_\theta}{\partial s^2} \left( s_\theta^0 \right) \cdot \overline{S}_t, \\
		\label{eq:Cauchy-4}
		\frac{d}{dt} \overline{\Theta}_t & = - \frac{\partial^2 E_\theta}{\partial \theta \partial s} \left( s_\theta^0 \right) \cdot \overline{S}_t.
	\end{align}
\end{thm}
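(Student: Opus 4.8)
The plan is to verify the four Cauchy relations (\ref{eq:Cauchy-1})--(\ref{eq:Cauchy-4}) directly from the definitions. The central tools are the flow (semigroup) identity $S_\theta^0(\s, t+h) = S_\theta^0\big(S_\theta^0(\s, h), t\big)$, the first-order expansion $S_\theta^0(\s, h) = \s - h\,\frac{\partial E_\theta}{\partial s}(\s) + o(h)$ read off from the free dynamics (\ref{eq:free-dynamics}), and the fixed-point condition (\ref{eq:free-fixed-point}). The initial conditions are immediate: since $S_\theta^0(\s,0) = \s$, the identity $L_\theta(\s,0) = C_\theta(\s)$ holds for all $(\theta,\s)$; differentiating it with respect to $\s$, resp.\ $\theta$, and then setting $\s = s_\theta^0$ gives (\ref{eq:Cauchy-1}) and (\ref{eq:Cauchy-2}).

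For the dynamical equations I would first derive a transport equation for the projected cost. Composing the semigroup identity with the definition (\ref{eq:projected-cost}) gives $L_\theta(\s, t+h) = L_\theta\big(S_\theta^0(\s,h), t\big)$; inserting the expansion of $S_\theta^0(\s,h)$ and letting $h \to 0$ yields
\begin{equation*}
	\frac{\partial L_\theta}{\partial t}(\s,t) = - \frac{\partial L_\theta}{\partial s}(\s,t) \cdot \frac{\partial E_\theta}{\partial s}(\s),
\end{equation*}
an identity valid for every $\theta$, every $\s$ and every $t \geq 0$.

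Next I would differentiate this transport equation with respect to $\s$, and separately with respect to $\theta$ (differentiating $\theta$ only where it appears explicitly --- through $C_\theta$ and $E_\theta$, not through the fixed point $s_\theta^0$ --- in accordance with the definition of $\overline{\Theta}_t$). The product rule generates two terms in each case; when we evaluate at $\s = s_\theta^0$, the term carrying the factor $\frac{\partial E_\theta}{\partial s}(s_\theta^0)$ vanishes by (\ref{eq:free-fixed-point}), and what survives is precisely (\ref{eq:Cauchy-3}) and (\ref{eq:Cauchy-4}). Here one also uses the interchange $\partial_t\partial_s = \partial_s\partial_t$ (and $\partial_t\partial_\theta = \partial_\theta\partial_t$) and the identity $S_\theta^0(s_\theta^0,t) = s_\theta^0$, which lets us recognize $\frac{\partial L_\theta}{\partial s}(s_\theta^0,t)$ as $\overline{S}_t$ on the right-hand sides. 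An equivalent, more pedestrian route is the adjoint one: by the chain rule $\overline{S}_t = \big(\frac{\partial S_\theta^0}{\partial s}(s_\theta^0,t)\big)^{\top} \frac{\partial C_\theta}{\partial s}(s_\theta^0)$, the flow Jacobian $M_t := \frac{\partial S_\theta^0}{\partial s}(s_\theta^0,t)$ solves the variational equation $\dot M_t = -\frac{\partial^2 E_\theta}{\partial s^2}(s_\theta^0)\,M_t$ with $M_0 = I$, and differentiating in $t$ (using that $M_t$ commutes with the constant symmetric Hessian) gives (\ref{eq:Cauchy-3}); keeping track in addition of the explicit $\theta$-dependence of the flow gives (\ref{eq:Cauchy-4}).

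The main difficulty is analytic rather than algebraic: one must justify the differentiations under the limit --- smooth dependence of the flow $S_\theta^0(\s,t)$ on $\s$ and on $\theta$, and the interchange of $\partial_t$ with the spatial and parameter derivatives --- which is exactly what the ``mild regularity conditions'' invoked around (\ref{eq:gradient-projected-cost}) are for (it suffices, e.g., that $E_\theta$ and $C_\theta$ be twice continuously differentiable). One should also check that the discarded term $\frac{\partial^2 L_\theta}{\partial s^2}(s_\theta^0,t)\cdot\frac{\partial E_\theta}{\partial s}(s_\theta^0)$ is a genuine zero and not an indeterminate form: $\frac{\partial^2 L_\theta}{\partial s^2}(s_\theta^0,t)$ is bounded for finite $t$ (and in fact $\to 0$ as $t\to\infty$, since $L_\theta(\cdot,t)\to J(\theta)$ becomes constant in $\s$). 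Granting these regularity points, the rest is routine.
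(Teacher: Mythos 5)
Your proposal is correct and follows essentially the same route as the paper's proof: the initial conditions from $L_\theta(\s,0)=C_\theta(\s)$, then the transport (Kolmogorov backward) identity $\frac{\partial L_\theta}{\partial t}(\s,t)+\frac{\partial L_\theta}{\partial s}(\s,t)\cdot\frac{\partial E_\theta}{\partial s}(\s)=0$ obtained from the semigroup property of the flow, differentiated with respect to $\s$ and $\theta$ and evaluated at $s_\theta^0$ so that the term carrying $\frac{\partial E_\theta}{\partial s}(s_\theta^0)$ drops out. The adjoint/variational-equation route and the regularity remarks you add are consistent but not needed beyond what the paper assumes.
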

Theorem \ref{thm:rec-backprop} is proved in Appendix \ref{sec:proofs}.
Theorem \ref{thm:rec-backprop} offers a way to compute the gradient $\frac{\partial J}{\partial \theta} \left( \theta \right)$.
In the first phase (or free phase), the state variable $s$ follows the free dynamics (Eq.~\ref{eq:free-dynamics}) and relaxes to the fixed point $s_\theta^0$.
Reaching this fixed point is necessary for evaluating the Hessian $\frac{\partial^2 E_\theta}{\partial s^2} \left( s_\theta^0 \right)$
which is required in the second phase.
In the second phase, one computes $\overline{S}_t$ and $\overline{\Theta}_t$ iteratively for increasing values of $t$ thanks to
Eq.~\ref{eq:Cauchy-1}, Eq.~\ref{eq:Cauchy-2}, Eq.~\ref{eq:Cauchy-3} and Eq.~\ref{eq:Cauchy-4}.
We obtain the desired gradient as $t \to \infty$ (Eq.~\ref{eq:limit}).

Note that the Hessian $\frac{\partial^2 E_\theta}{\partial s^2} \left( s_\theta^0 \right)$ is positive definite
since $s_\theta^0$ is an energy minimum.
Therefore Eq.~\ref{eq:Cauchy-3} guarantees that $\frac{\partial L_\theta}{\partial s} \left( s_\theta^0,t \right) \to 0$ as $t \to \infty$,
in agreement with the fact that $J(\theta)$ is (locally) insensitive to the initial state ($\s = s_\theta^0$ in our case).

From the point of view of biological plausibility,
the requirement to run the dynamics for $\overline{S}_t$ and $\overline{\Theta}_t$ to compute the gradient $\frac{\partial J}{\partial \theta}(\theta)$ is not satisfying.
It is not clear what the quantities $\overline{S}_t$ and $\overline{\Theta}_t$ would represent in a biological network.
This issue is adressed in sections \ref{sec:equi-prop} and \ref{sec:coding}.


\section{Equilibrium Propagation}
\label{sec:equi-prop}

In this section, we present Equilibrium Propagation \citep{Scellier+Bengio-frontiers2017},
another algorithm which computes the gradient of the objective function $J$ (Eq.~\ref{eq:objective}).
At first sight, Equilibrium Propagation and Recurrent Backpropagation share little in common.
However in section \ref{sec:coding} we will show a profound connection between these algorithms.


\subsection{Augmented Energy Function}

The central idea of Equilibrium Propagation is to introduce the \textit{augmented energy function}
\begin{equation}
	E_\theta^\beta(s) := E_\theta(s) + \beta \; C_\theta(s),
\end{equation}
where $\beta \geq 0$ is a scalar which we call \textit{influence parameter}.
The free dynamics (Eq.~\ref{eq:free-dynamics}) is then replaced by the \textit{augmented dynamics}
\begin{equation}
	\label{eq:augmented-dynamics}
	\frac{ds}{dt} = -\frac{\partial E_\theta^\beta}{\partial s}(s).
\end{equation}
The state variable now follows the dynamics $\frac{ds}{dt} = - \frac{\partial E_\theta}{\partial s}(s) - \beta \frac{\partial C_\theta}{\partial s}(s)$.
When $\beta > 0$, in addition to the usual term $-\frac{\partial E_\theta}{\partial s}(s)$,
an additional term $-\beta\frac{\partial C_\theta}{\partial s}(s)$ nudges $s$ towards configurations that have lower cost values.
In the case of the model described in section \ref{sec:hopfield} with the quadractic cost function (Eq.~\ref{eq:quadratic-cost}),
the new term $-\beta\frac{\partial C_\theta}{\partial s}(s)$ is the vector of $\dim(s)$ whose component on $s_0$ is $\beta \left( \y-s_0 \right)$
and whose components on $s_1$ and $s_2$ are zero.
Thus, the new term takes the form of a 'force' that nudges the output layer $s_0$ towards the target $\y$.
Unlike the free dynamics which only depends on $\x$ (through $E_\theta(\x,s)$) but not on $\y$,
the augmented dynamics also depends on $\y$ (through $C_\theta(\y,s)$).

Note that the free dynamics corresponds to the value $\beta=0$.
We then generalize the notion of fixed point for any value of $\beta$.
The augmented dynamics converges to the fixed point $s_\theta^\beta$, an energy minimum of $E_\theta^\beta$ characterized by
\begin{equation}
	\frac{\partial E_\theta^\beta}{\partial s} \left( s_\theta^\beta \right) = 0.
\end{equation}

Theorem \ref{thm:equilibrium-prop} below shows that the gradient $\frac{\partial J}{\partial \theta}(\theta)$ can be estimated based on measures at the fixed points $s_\theta^0$ and $s_\theta^\beta$.

\begin{thm}[Equilibrium Propagation]
	\label{thm:equilibrium-prop}
	The gradient of the objective function with respect to $\theta$ can be estimated thanks to the formula
	\begin{equation}
		\label{eq:gradient-formula}
		\frac{\partial J}{\partial \theta}(\theta) =
		\lim_{\beta \to 0} \frac{1}{\beta} \left( \frac{\partial E_\theta^\beta}{\partial \theta} \left( s_\theta^\beta \right)
		- \frac{\partial E_\theta^0}{\partial \theta} \left( s_\theta^0 \right) \right).
	\end{equation}
\end{thm}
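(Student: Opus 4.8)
The plan is to reduce the formula to the envelope theorem together with the symmetry of mixed second partial derivatives (Schwarz's theorem). The key object is the scalar function
\[
	\Phi(\theta,\beta) := E_\theta^\beta\!\left(s_\theta^\beta\right),
\]
the value of the augmented energy at its own fixed point, regarded as a function of the two arguments $\theta$ and $\beta$. Everything will follow from computing its first-order partial derivatives and then differentiating once more.

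First I would compute $\frac{\partial\Phi}{\partial\theta}$ and $\frac{\partial\Phi}{\partial\beta}$. Differentiating $\Phi(\theta,\beta)=E_\theta(s_\theta^\beta)+\beta\,C_\theta(s_\theta^\beta)$ by the chain rule produces, in each case, a term in which the gradient $\frac{\partial E_\theta^\beta}{\partial s}(s_\theta^\beta)$ is contracted with the derivative of the fixed point $s_\theta^\beta$; by the fixed-point characterization $\frac{\partial E_\theta^\beta}{\partial s}(s_\theta^\beta)=0$ this term vanishes. What remains is the envelope identity
\[
	\frac{\partial\Phi}{\partial\theta}(\theta,\beta)=\frac{\partial E_\theta^\beta}{\partial\theta}\!\left(s_\theta^\beta\right),\qquad
	\frac{\partial\Phi}{\partial\beta}(\theta,\beta)=C_\theta\!\left(s_\theta^\beta\right),
\]
where the second equality also uses $\frac{\partial E_\theta^\beta}{\partial\beta}=C_\theta$. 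In particular $\frac{\partial\Phi}{\partial\beta}(\theta,0)=C_\theta(s_\theta^0)=J(\theta)$.

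Next I would observe that the right-hand side of Eq.~\ref{eq:gradient-formula} is precisely a difference quotient of $\frac{\partial\Phi}{\partial\theta}$ in the variable $\beta$ around $\beta=0$: indeed $\frac{\partial E_\theta^\beta}{\partial\theta}(s_\theta^\beta)=\frac{\partial\Phi}{\partial\theta}(\theta,\beta)$ and $\frac{\partial E_\theta^0}{\partial\theta}(s_\theta^0)=\frac{\partial\Phi}{\partial\theta}(\theta,0)$, so the limit equals $\frac{\partial^2\Phi}{\partial\beta\,\partial\theta}(\theta,0)$. By Schwarz's theorem this is $\frac{\partial^2\Phi}{\partial\theta\,\partial\beta}(\theta,0)=\frac{\partial}{\partial\theta}\big[\,J(\theta)\,\big]=\frac{\partial J}{\partial\theta}(\theta)$, which is the claim.

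The main obstacle is analytic rather than algebraic: all of the above presupposes that the fixed point $s_\theta^\beta$ is well defined and jointly $C^2$ in $(\theta,\beta)$ near $\beta=0$, so that the chain rule and the interchange of partial derivatives are legitimate. This is where the mild regularity conditions enter. Since $s_\theta^0$ is an energy minimum, the Hessian $\frac{\partial^2 E_\theta}{\partial s^2}(s_\theta^0)$ is positive definite, hence invertible, so the implicit function theorem applied to $\frac{\partial E_\theta^\beta}{\partial s}(s)=0$ yields a smooth branch $(\theta,\beta)\mapsto s_\theta^\beta$ through $s_\theta^0$ on which the differentiations are valid. An alternative, fully explicit route that avoids $\Phi$ is to differentiate the two fixed-point equations directly — $\frac{\partial E_\theta^\beta}{\partial s}(s_\theta^\beta)=0$ with respect to $\beta$, and $\frac{\partial E_\theta}{\partial s}(s_\theta^0)=0$ with respect to $\theta$ — and to combine the resulting linear relations using the symmetry of the Hessian; this reproduces the same identity at the cost of more index bookkeeping.
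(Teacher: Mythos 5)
Your proposal is correct and is essentially the paper's own argument: the function $\Phi(\theta,\beta)=E_\theta^\beta\left(s_\theta^\beta\right)$ is exactly the quantity whose cross total derivatives the paper equates, the envelope step (killing the chain-rule term via the fixed-point condition $\frac{\partial E_\theta^\beta}{\partial s}\left(s_\theta^\beta\right)=0$) is the same, and identifying the difference quotient at $\beta=0$ with the mixed partial is the same final step. Your remarks on the implicit function theorem and the positive-definite Hessian match the regularity assumptions the paper states before its proof, so there is nothing to add.
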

A proof of Theorem \ref{thm:equilibrium-prop} is given in Appendix \ref{sec:equi-prop-proof}.
Note that Theorem \ref{thm:equilibrium-prop} is also a consequence of the more general formula of Eq.~\ref{eq:cor-2} (Theorem \ref{thm:error-derivatives}),
established in the next section.

Theorem \ref{thm:equilibrium-prop} offers another way to estimate the gradient of $J(\theta)$.
As in Recurrent Backpropagation, in the first phase (or `free phase'), the network follows the free dynamics (Eq.~\ref{eq:free-dynamics}).
This is equivalent to saying that the network follows the augmented dynamics (Eq.~\ref{eq:augmented-dynamics}) when the value of $\beta$ is set to $0$.
The network relaxes to the free fixed point $s_\theta^0$, where $\frac{\partial E_\theta}{\partial \theta} \left( s_\theta^0 \right)$ is measured.
In the second phase which we call \textit{nudged phase}, the influence parameter takes on a small positive value $\beta \gtrsim 0$,
and the network relaxes to a new but nearby fixed point $s_\theta^\beta$ where $\frac{\partial E_\theta^\beta}{\partial \theta} \left( s_\theta^\beta \right)$ is measured.
The gradient of the objective function is estimated thanks to the formula in Eq.~\ref{eq:gradient-formula}.

In the case of the modified Hopfield energy (Eq.~\ref{eq:hopfield-energy}) the components of $\frac{\partial E_\theta}{\partial \theta}(s)$ are $\frac{\partial E_\theta}{\partial W_{01}}(s)$, $\frac{\partial E_\theta}{\partial W_{12}}(s)$ and $\frac{\partial E_\theta}{\partial W_{23}}(s)$.
For instance
$\frac{\partial E_\theta}{\partial W_{01}}(s) = - \rho \left( s_0 \right) \cdot \rho \left( s_1 \right)^T$ is a matrix of size $\dim(s_0) \times \dim(s_1)$
whose entries can be measured locally at each synapse based on the presynaptic activity and postsynaptic activity.
Thus the learning rule of Eq.~\ref{eq:gradient-formula} is a kind of contrastive Hebbian learning rule at the free and nudged fixed points.

At the beginning of the second phase, the network is initially at the free fixed point $s_\theta^0$.
When the influence parameter takes on a small positive value $\beta \gtrsim 0$,
the novel term $-\beta \frac{\partial C_\theta}{\partial s}(s)$ in the dynamics of the state variable perturbs the system.
This perturbation propagates in the layers of the network until convergence to the new fixed point $s_\theta^\beta$.

In the next section, we go beyond the analysis of fixed points and we show that, at every moment $t$ in the nudged phase,
the temporal derivative $\frac{ds}{dt}$ encodes the error derivative of Eq.~\ref{eq:error-derivative}.


\section{Temporal Derivatives Code for Error Derivatives}
\label{sec:coding}

Theorem \ref{thm:equilibrium-prop} shows that the gradient of $J$ can be estimated based on the free and nudged fixed points only.
In this section we are interested in the dynamics of the network in the second phase, from the free fixed point to the nudged fixed point.
Recall that $S_\theta^0(\s,t)$ is the flow of the dynamical system (Eq.~\ref{eq:free-dynamics}), that is the state of the network at time $t \geq 0$
when it starts from an initial state $\s$ at time $t=0$ and follows the free dynamics.
Similarly we define $S_\theta^\beta(\s,t)$ for any value of $\beta$ when the network follows the augmented dynamics (Eq.~\ref{eq:augmented-dynamics}).

In Equilibrium Propagation, the state of the network at the beginning of the nudged phase is the free fixed point $s_\theta^0$.
We choose as origin of time $t=0$ the moment when the second phase starts:
the network is in the state $s_\theta^0$
and the influence parameter takes on a small positive value $\beta \gtrsim 0$.
With our notations, the state of the network after a duration $t$ in the nudged phase is $S_\theta^\beta \left( s_\theta^0,t \right)$.
As $t \to \infty$ the network's state converges to the nudged fixed point $S_\theta^\beta \left( s_\theta^0,t \right) \to s_\theta^\beta$.


\subsection{Process of Temporal Derivatives}

Now we are ready to introduce the \textit{process of temporal derivatives} $(\widetilde{S}_t,\widetilde{\Theta}_t)_{t \geq 0}$, defined by
\begin{align}
	\label{eq:temporal-derivative-s}
	\widetilde{S}_t      & := - \lim_{\beta \to 0} \frac{1}{\beta} \frac{\partial S_\theta^\beta}{\partial t} \left( s_\theta^0,t \right), \\
	\label{eq:temporal-derivative-theta}
	\widetilde{\Theta}_t & := \lim_{\beta \to 0} \frac{1}{\beta} \left(
	\frac{\partial E_\theta^\beta}{\partial \theta} \left( S_\theta^\beta \left( s_\theta^0,t \right) \right)
	- \frac{\partial E_\theta^0}{\partial \theta} \left( s_\theta^0 \right)
	\right).
\end{align}
Like $\overline{S}_t$ and $\overline{\Theta}_t$, the processes $\widetilde{S}_t$ and $\widetilde{\Theta}_t$ take values in the state space and parameter space respectively.

The process $\widetilde{S}_t$ is simply the temporal derivative $\frac{ds}{dt}$ in the second phase,
rescaled by $\frac{1}{\beta}$ (so that its value does not depend on the particular choice of $\beta \gtrsim 0$).

Theorem \ref{thm:error-derivatives} below states that the processes $(\overline{S}_t,\overline{\Theta}_t)$ and $(\widetilde{S}_t,\widetilde{\Theta}_t)$ are equal.

\begin{thm}[Temporal Derivatives as Error Derivatives]
	\label{thm:error-derivatives}
	The processes of error derivatives and temporal derivatives are equal,
	that is $\overline{S}_t = \widetilde{S}_t$ and $\overline{\Theta}_t = \widetilde{\Theta}_t$ for every $t \geq 0$.
	Using explicit forms, this translates into:
	\small
	\begin{align}
		\label{eq:cor-1}
		\hspace*{-0.2cm}
		\frac{\partial L_\theta}{\partial s} \left( s_\theta^0,t \right) & =
		- \lim_{\beta \to 0} \frac{1}{\beta} \frac{\partial S_\theta^\beta}{\partial t} \left( s_\theta^0,t \right), \\
		\label{eq:cor-2}
		\hspace*{-0.2cm}
		\frac{\partial L_\theta}{\partial \theta} \left( s_\theta^0,t \right) & = \lim_{\beta \to 0} \frac{1}{\beta} \left(
		\frac{\partial E_\theta^\beta}{\partial \theta} \left( S_\theta^\beta \left( s_\theta^0,t \right) \right) - \frac{\partial E_\theta^0}{\partial \theta} \left( s_\theta^0 \right)
		\right).
	\end{align}
	\normalsize
\end{thm}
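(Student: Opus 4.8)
\section{Proof Proposal for Theorem~\ref{thm:error-derivatives}}

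The plan is to show that the process of temporal derivatives $(\widetilde{S}_t,\widetilde{\Theta}_t)$ satisfies exactly the same Cauchy problem (Eq.~\ref{eq:Cauchy-1}--Eq.~\ref{eq:Cauchy-4}) that Theorem~\ref{thm:rec-backprop} attributes to the process of error derivatives $(\overline{S}_t,\overline{\Theta}_t)$. Since that system is a linear ordinary differential equation with constant coefficients, it has a unique solution for given initial data, so the two processes must coincide, which is the assertion of the theorem; Eq.~\ref{eq:cor-1} and Eq.~\ref{eq:cor-2} are then just the explicit restatements of $\overline{S}_t=\widetilde{S}_t$ and $\overline{\Theta}_t=\widetilde{\Theta}_t$. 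The engine of the argument is the first-order sensitivity of the augmented flow with respect to the influence parameter at $\beta=0$. Writing $u_t^\beta := S_\theta^\beta(s_\theta^0,t)$, I would introduce
\begin{equation}
	v_t := \lim_{\beta \to 0} \frac{1}{\beta}( u_t^\beta - s_\theta^0 ),
\end{equation}
noting that $u_t^0 = s_\theta^0$ for all $t$ because $s_\theta^0$ is a fixed point of the free dynamics, and that $v_0 = 0$ because $u_0^\beta = s_\theta^0$ for every $\beta$. The crux is that $v_t$ is well defined and differentiable in $t$ and that one may differentiate under the $\beta$-limit.

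Granting this, the rest is a short computation. Starting from the flow identity $\frac{\partial S_\theta^\beta}{\partial t}(s_\theta^0,t) = -\frac{\partial E_\theta^\beta}{\partial s}(u_t^\beta)$, substituting $E_\theta^\beta = E_\theta + \beta\, C_\theta$, Taylor-expanding $\frac{\partial E_\theta}{\partial s}(u_t^\beta)$ around $s_\theta^0$ and using $\frac{\partial E_\theta}{\partial s}(s_\theta^0)=0$, then dividing by $\beta$ and letting $\beta \to 0$, yields
\begin{equation}
	\frac{d v_t}{dt} = -\frac{\partial^2 E_\theta}{\partial s^2}(s_\theta^0)\, v_t - \frac{\partial C_\theta}{\partial s}(s_\theta^0), \qquad v_0 = 0,
\end{equation}
and at the same time $\widetilde{S}_t = -\frac{d v_t}{dt} = \frac{\partial^2 E_\theta}{\partial s^2}(s_\theta^0) v_t + \frac{\partial C_\theta}{\partial s}(s_\theta^0)$. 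An analogous expansion of $\frac{\partial E_\theta^\beta}{\partial \theta}(u_t^\beta)$ about $s_\theta^0$ (the constant term $\frac{\partial E_\theta}{\partial\theta}(s_\theta^0)$ being cancelled by the subtraction of $\frac{\partial E_\theta^0}{\partial\theta}(s_\theta^0)$ in the definition of $\widetilde\Theta_t$, and using the symmetry of the mixed second derivative of $E_\theta$) gives $\widetilde{\Theta}_t = \frac{\partial^2 E_\theta}{\partial \theta \partial s}(s_\theta^0) v_t + \frac{\partial C_\theta}{\partial \theta}(s_\theta^0)$. Evaluating both expressions at $t=0$ recovers Eq.~\ref{eq:Cauchy-1} and Eq.~\ref{eq:Cauchy-2} since $v_0=0$; differentiating them in $t$ and inserting the ODE for $v_t$ recovers Eq.~\ref{eq:Cauchy-3} and Eq.~\ref{eq:Cauchy-4}. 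Uniqueness then closes the argument.

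I expect the main obstacle to be purely analytic: justifying that $v_t$ exists, i.e. that the flow $S_\theta^\beta(s_\theta^0,t)$ is differentiable in $\beta$ at $\beta=0$ (uniformly enough, at least on compact time intervals), that the Taylor remainders are $o(\beta)$ uniformly in $t$, and that $\lim_{\beta\to 0}$ commutes with $\frac{d}{dt}$ so that $\widetilde S_t=-\dot v_t$. All of this is standard smooth-dependence-on-parameters theory for ordinary differential equations and holds under the same mild regularity hypotheses on $E_\theta$ and $C_\theta$ (being $C^2$, with the positive-definite Hessian $\frac{\partial^2 E_\theta}{\partial s^2}(s_\theta^0)$ already invoked for Theorem~\ref{thm:rec-backprop}) that are tacitly in force throughout. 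I would state these hypotheses where needed and otherwise keep the perturbation analysis at the formal level, consistent with the rest of the paper.
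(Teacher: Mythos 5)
Your proposal is correct and follows essentially the same route as the paper's own proof: the paper likewise shows that $(\widetilde{S}_t,\widetilde{\Theta}_t)$ satisfies the Cauchy problem of Theorem~\ref{thm:rec-backprop} by differentiating the augmented flow equation with respect to $\beta$ at $\beta=0$ (your $v_t$ is exactly the paper's $\left.\frac{\partial S_\theta^\beta}{\partial \beta}\right|_{\beta=0}\left(s_\theta^0,t\right)$, with the same linear ODE and initial condition $v_0=0$), and then concludes by uniqueness of the solution to the linear differential equation with given initial data. The only difference is presentational: the paper differentiates under the flow directly rather than phrasing it as a Taylor expansion, and it does not dwell on the smooth-dependence-on-parameters justification you flag, which is indeed the tacit regularity assumption in force.
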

Theorem \ref{thm:error-derivatives} is proved in Appendix \ref{sec:proofs}.
In essence, Eq.~\ref{eq:cor-1} says that in the second phase of Equilibrium Propagation,
the temporal derivative $\frac{ds}{dt}$ (rescaled by $\frac{1}{\beta}$) \textit{encodes} the error derivative (Eq.~\ref{eq:error-derivative}).

Here is an interpretation of Eq.~\ref{eq:cor-1}.
Suppose that the network is initially at the fixed point $\s = s_\theta^0$.
Consider the cost $L_\theta \left( s_\theta^0+\Delta s,t \right)$ a duration $t$ in the future if one moved the initial state $\s = s_\theta^0$ by a small step $\Delta s$.
The goal is to find the direction $\Delta s$ which would minimize $L_\theta \left( s_\theta^0+\Delta s,t \right)$.
The naive approach by trials and errors is neither biologically plausible nor efficient.
Eq.~\ref{eq:cor-1} tells us that there is a physically realistic way of finding such a direction $\Delta s$ in one attempt - this direction
is encoded in the temporal derivative $\frac{ds}{dt}$ at time $t$ after starting the nudged phase.

Note that as $t \to \infty$ both sides of Eq.~\ref{eq:cor-1} converge to $0$.
This is a consequence of Eq.~\ref{eq:Cauchy-3} and the fact that the Hessian $\frac{\partial^2 E_\theta}{\partial s^2} \left( s_\theta^0 \right)$ is positive definite (since $s_\theta^0$ is an energy minimum), as already mentioned in section \ref{sec:rec-backprop}.
Intuitively the right hand side converges to $0$ because $S_\theta^\beta \left( s_\theta^0,t \right)$ converges smoothly to the nudged fixed point $s_\theta^\beta$.
As for the left-hand side, when $t$ is large, $L_\theta \left( \s,t \right)$ is close to the cost of the energy minimum, and thus has little sensitivity to the initial state $\s$.

Finally, as $t \to \infty$ in Eq.~\ref{eq:cor-2}, one recovers the gradient formula of Equilibrium Propagation (Theorem \ref{thm:equilibrium-prop}).
Interestingly, Eq.~\ref{eq:cor-2} shows that, in Equilibrium Propagation, \textit{halting the second phase before convergence to the nudged fixed point} corresponds to \textit{Truncated Recurrent Backpropagation}.



\section{Conclusion}

Our work establishes a close connection between two algorithms for fixed point recurrent networks, namely Recurrent Backpropagation and Equilibrium Propagation.
The temporal derivatives of the neural activities in the second phase of Equilibrium Propagation
are equal to the error derivatives which Recurrent Backpropagation computes iteratively.
Moreover, we have shown that halting the second phase before convergence in Equilibrium Propagation is equivalent to Truncated Recurrent Backpropagation.
Our work supports the hypothesis that, in biological networks, temporal changes in neural activities may represent error signals for supervised learning from a machine learning perspective.

One important drawback of the theory presented here is that it assumes the existence of an energy function.
In the case of the Hopfield energy, this implies symmetric connections between neurons.
It would be worth extending the analysis to dynamics which do not involve energy functions but we leave this for future work.
Another concern is the fact that our algorithm is rate-based whereas biological neurons emit spikes.
Ideally we would like a theory applicable to spiking networks.
Finally, the assumption of the existence of specialized 'output neurons' ($s_0$ here) would need to be relaxed too.

From a practical point of view, another issue is that the time needed to converge to the first fixed point was experimentally found to grow exponentially with the number of layers in \citet{Scellier+Bengio-frontiers2017}.
Although Eq.~\ref{eq:cor-2} provides a new justification for saving time by stopping the second phase early, our algorithm (as well as Recurrent Backpropagation) still requires convergence to the free fixed point in the first phase.


\section*{Acknowledgments}

The authors would like to thank 
NSERC, CIFAR, Samsung and Canada Research Chairs for funding.


\bibliographystyle{abbrvnat}
\bibliography{biblio}


\newpage
\appendix
\part*{Appendix}


\section{Recurrent Backpropagation - Proof}
\label{sec:rec-backprop-proof}

\begin{proof}[Proof of Theorem \ref{thm:rec-backprop}]
	First of all, by definition of $L$ (Eq.~\ref{eq:projected-cost}) we have $L_\theta(\s,0) = C_\theta(\s)$.
	Therefore the initial conditions (Eq.~\ref{eq:Cauchy-1} and Eq.~\ref{eq:Cauchy-2}) are satisfied:
	\begin{equation}
		\overline{S}_0 = \frac{\partial L_\theta}{\partial s} \left( s_\theta^0,0 \right)
		= \frac{\partial C_\theta}{\partial s} \left( s_\theta^0 \right)
	\end{equation}
	and
	\begin{equation}
		\overline{\Theta}_0 = \frac{\partial L_\theta}{\partial \theta} \left( s_\theta^0,0 \right)
		= \frac{\partial C_\theta}{\partial \theta} \left( s_\theta^0 \right).
	\end{equation}

	It remains to show Eq.~\ref{eq:Cauchy-3} and Eq.~\ref{eq:Cauchy-4}.
	We omit temporarily to write the dependence in $\theta$ to keep notations simple.
	As a preliminary result, we show that for any initial state $\s$ and time $t$ we have
	\footnote{Eq.~\ref{eq:proof1-1} is the Kolmogorov backward equation for deterministic processes.}
	\begin{equation}
		\label{eq:proof1-1}
		\frac{\partial L}{\partial t}(\s,t) + \frac{\partial L}{\partial s}(\s,t) \cdot \frac{\partial E}{\partial s}(\s) = 0.
	\end{equation}
	To this end note that (by definition of $L$ and $S^0$) we have for any $t$ and $u$
	\begin{equation}
		\label{eq:proof1-2}
		L \left( S^0(\s,u),t \right) = L(\s,t+u).
	\end{equation}
	The derivatives of the right-hand side of Eq.~\ref{eq:proof1-2} with respect to $t$ and $u$ are clearly equal:
	\begin{equation}
		\frac{d}{dt} L(\s,t+u) = \frac{d}{du} L(\s,t+u).
	\end{equation}
	Therefore the derivatives of the left-hand side of Eq.~\ref{eq:proof1-2} are equal too:
	\begin{align}
		\hspace*{-0.25cm} \frac{\partial L}{\partial t}\left( S^0(\s,u),t \right) & = \frac{d}{du} L \left( S^0(\s,u),t \right) \\
		\hspace*{-0.25cm} & = - \frac{\partial L}{\partial s}\left( S^0(\s,u),t \right) \cdot \frac{\partial E}{\partial s}\left( S^0(\s,u) \right).
	\end{align}
	Here we have used the differential equation of motion (Eq.~\ref{eq:free-dynamics}).
	Evaluating this expression for $u=0$ we get Eq.~\ref{eq:proof1-1}.

	Now we are ready to show that $\overline{S}_t = \frac{\partial L}{\partial s} \left( s^0,t \right)$ satisfies the differential equation in Eq.~\ref{eq:Cauchy-3}.
	Differentiating Eq.~\ref{eq:proof1-1} with respect to $s$, we get
	\begin{equation}
		\frac{\partial^2 L}{\partial t \partial s}(\s,t) + \frac{\partial^2 L}{\partial s^2}(\s,t) \cdot \frac{\partial E}{\partial s}(\s)
		+ \frac{\partial L}{\partial s}(\s,t) \cdot \frac{\partial^2 E}{\partial s^2}(\s) = 0.
	\end{equation}
	Evaluating this expression at the fixed point $\s = s^0$
	and using the fixed point condition $\frac{\partial E}{\partial s} \left( s^0 \right) = 0$ we get
	\begin{equation}
		\frac{d}{dt} \frac{\partial L}{\partial s} \left( s^0,t \right)
		= - \frac{\partial^2 E}{\partial s^2} \left( s^0 \right) \cdot \frac{\partial L}{\partial s} \left( s^0,t \right).
	\end{equation}
	Therefore $\overline{S}_t = \frac{\partial L}{\partial s} \left( s^0,t \right)$ satisfies Eq.~\ref{eq:Cauchy-3}.

	We prove Eq.~\ref{eq:Cauchy-4} similarly.
	Differentiating Eq.~\ref{eq:proof1-1} with respect to $\theta$, we get
	\begin{align}
		\frac{\partial^2 L_\theta}{\partial t \partial \theta} \left( \s,t \right) + \frac{\partial^2 L_\theta}{\partial s \partial \theta} \left( \s,t \right) \cdot \frac{\partial E_\theta}{\partial s} (\s) & \nonumber \\
		+ \frac{\partial L_\theta}{\partial s}(\s,t) \cdot \frac{\partial^2 E_\theta}{\partial s \partial \theta}(\s) & = 0.
	\end{align}
	Evaluating this expression at the fixed point $\s = s_\theta^0$ we get
	\begin{equation}
		\frac{d}{dt} \frac{\partial L_\theta}{\partial \theta} \left( s_\theta^0,t \right)
		= - \frac{\partial^2 E_\theta}{\partial \theta \partial s} \left( s_\theta^0 \right) \cdot \frac{\partial L_\theta}{\partial s} \left( s_\theta^0,t \right).
	\end{equation}
	Hence the result.
\end{proof}


\section{Equilibrium Propagation - Proof}
\label{sec:equi-prop-proof}

In this Appendix we prove Theorem \ref{thm:equilibrium-prop}.
The same proof was already provided in \citet{Scellier+Bengio-frontiers2017}.

Since the data point $(\x,\y)$ does not play any role, its dependence is omitted in the notations.
We assume that the energy function $E_\theta(s)$ and the cost function $C_\theta(s)$
(and thus the augmented energy function $E_\theta^\beta(s)$) are twice differentiable
and that the conditions of the implicit function theorem are satisfied
so that the fixed point $s_\theta^\beta$ is a continuously differentiable function of $(\theta,\beta)$.

\begin{proof}[Proof of Theorem \ref{thm:equilibrium-prop}]
	Recall that we want to show the "gradient formula"
	\begin{equation}
		\label{eq:grad-for-equi-prop}
		\frac{\partial J}{\partial \theta} \left( \theta \right)
		= \lim_{\beta \to 0} \frac{1}{\beta} \left(
		\frac{\partial E_\theta^\beta}{\partial \theta} \left( s_\theta^\beta \right)
		- \frac{\partial E_\theta^0}{\partial \theta} \left( s_\theta^0 \right)
		\right).
	\end{equation}
	The gradient formula Eq.~\ref{eq:grad-for-equi-prop} is a particular case of the following formula
	\footnote{
	The notations $\frac{\partial E_\theta^\beta}{\partial \theta}$ and $\frac{\partial E_\theta^\beta}{\partial \beta}$
	are used to mean the \textit{partial derivatives} with respect to the arguments of $E_\theta^\beta$,
	whereas $\frac{d}{d \theta}$ and $\frac{d}{d \beta}$ represent the \textit{total derivatives}
	with respect to $\theta$ and $\beta$ respectively
	(which include the differentiation path through $s_\theta^\beta$).
	The total derivative $\frac{d}{d \theta}$ (resp. $\frac{d}{d \beta}$) is performed for fixed $\beta$ (resp. fixed $\theta$).
	},
	when evaluated at the point $\beta=0$:
	\begin{equation}
		\label{lemma-equi-prop}
		\frac{d}{d \theta} \frac{\partial E_\theta^\beta}{\partial \beta} \left( s_\theta^\beta \right)
		= \frac{d}{d \beta} \frac{\partial E_\theta^\beta}{\partial \theta} \left( s_\theta^\beta \right).
	\end{equation}
	Therefore, in order to prove Eq.~\ref{eq:grad-for-equi-prop}, it is sufficient to prove  Eq.~\ref{lemma-equi-prop}.

	First, the cross-derivatives of $(\theta,\beta) \mapsto E_\theta^\beta \left( s_\theta^\beta \right)$ are equal:
	\begin{equation}
		\label{eq:cross-derivatives}
		\frac{d}{d \theta} \frac{d}{d \beta} E_\theta^\beta \left( s_\theta^\beta \right)
		= \frac{d}{d \beta} \frac{d}{d \theta} E_\theta^\beta \left( s_\theta^\beta \right).
	\end{equation}
	Second, by the chain rule of differentiation we have
	\begin{align}
	  \frac{d}{d\beta} E_\theta^\beta \left( s_\theta^\beta \right)
	  & = \frac{\partial E_\theta^\beta}{\partial \beta} \left( s_\theta^\beta \right)
	  + \frac{\partial E_\theta^\beta}{\partial s} \left( s_\theta^\beta \right) \cdot \frac{\partial s_\theta^\beta}{\partial \beta} \\
	  & = \frac{\partial E_\theta^\beta}{\partial \beta} \left( s_\theta^\beta \right).
	  \label{eq:derivative-beta}
	\end{align}
	Here we have used the fixed point condition
	\begin{equation}
		\frac{\partial E_\theta^\beta}{\partial s} \left( s_\theta^\beta \right) = 0.
	\end{equation}
	Similarly we have
	\begin{equation}
	  \label{eq:derivative-theta}
	  \frac{d}{d\theta} E_\theta^\beta \left( s_\theta^\beta \right)
	  = \frac{\partial E_\theta^\beta}{\partial \theta} \left( s_\theta^\beta \right).
	\end{equation}
	Plugging Eq.~\ref{eq:derivative-beta} and Eq.~\ref{eq:derivative-theta} in Eq.~\ref{eq:cross-derivatives}, we get Eq.~\ref{lemma-equi-prop}.
	Hence the result.
\end{proof}


\section{Temporal Derivatives Code For Error Derivatives - Proof}
\label{sec:proofs}

\begin{proof}[Proof of Theorem \ref{thm:error-derivatives}]
	In order to prove Theorem \ref{thm:error-derivatives}, we have to show that the process $(\widetilde{S}_t,\widetilde{\Theta}_t)$
	satisfies the same differential equations as $(\overline{S}_t,\overline{\Theta}_t)$,
	namely Eq.~\ref{eq:Cauchy-1}, Eq.~\ref{eq:Cauchy-2}, Eq.~\ref{eq:Cauchy-3} and Eq.~\ref{eq:Cauchy-4} (Theorem \ref{thm:rec-backprop}).
	We will conclude by using the uniqueness of the solution to the differential equation with initial condition.

	First of all, note that
	\begin{align}
		& \left. \frac{\partial^2 S_\theta^\beta}{\partial \beta \partial t} \right|_{\beta=0} \left( s_\theta^0,t \right) \nonumber \\
		= & \lim_{\beta \to 0} \frac{1}{\beta} \left( \frac{\partial S_\theta^\beta}{\partial t} \left( s_\theta^0,t \right) - \frac{\partial S_\theta^0}{\partial t} \left( s_\theta^0,t \right) \right) \\
		= & \lim_{\beta \to 0} \frac{1}{\beta} \frac{\partial S_\theta^\beta}{\partial t} \left( s_\theta^0,t \right).
	\end{align}
	The latter equality comes from the fact that $S_\theta^0 \left( s_\theta^0,t \right) = s_\theta^0$ for every $t \geq 0$,
	implying that $\frac{\partial S_\theta^0}{\partial t} \left( s_\theta^0,t \right) = 0$ at every moment $t \geq 0$.
	Furthermore
	\begin{align}
		& \left. \frac{d}{d\beta} \right|_{\beta=0} \frac{\partial E_\theta^\beta}{\partial \theta} \left( S_\theta^\beta \left( s_\theta^0,t \right) \right) \nonumber \\
		= & \lim_{\beta \to 0} \frac{1}{\beta} \left(
			\frac{\partial E_\theta^\beta}{\partial \theta} \left( S_\theta^\beta \left( s_\theta^0,t \right) \right)
			- \frac{\partial E_\theta^0}{\partial \theta} \left( S_\theta^0 \left( s_\theta^0,t \right) \right)
			\right) \\
		= & \lim_{\beta \to 0} \frac{1}{\beta} \left(
			\frac{\partial E_\theta^\beta}{\partial \theta} \left( S_\theta^\beta \left( s_\theta^0,t \right) \right)
			- \frac{\partial E_\theta}{\partial \theta} \left( s_\theta^0 \right)
			\right).
	\end{align}
	Again the latter equality comes from the fact that $S_\theta^0 \left( s_\theta^0,t \right) = s_\theta^0$ for every $t \geq 0$.
	Therefore
	\begin{align}
		\widetilde{S}_t      & = - \left. \frac{\partial^2 S_\theta^\beta}{\partial \beta \partial t} \right|_{\beta=0} \left( s_\theta^0,t \right), \qquad  \forall t \geq 0, \\
		\widetilde{\Theta}_t & = \left. \frac{d}{d\beta} \right|_{\beta=0} \frac{\partial E_\theta^\beta}{\partial \theta} \left( S_\theta^\beta \left( s_\theta^0,t \right) \right), \qquad  \forall t \geq 0.
	\end{align}

	Now we prove that $\widetilde{S}_t$ is the solution of Eq.~\ref{eq:Cauchy-1} and Eq.~\ref{eq:Cauchy-3}.
	We omit to write the dependence in $\theta$ to keep notations simple.
	The process $\left( S^\beta \left( s^0,t \right) \right)_{t \geq 0}$ is the solution of the differential equation
	\begin{equation}
		\label{eq:proof3-1}
		\frac{\partial S^\beta}{\partial t} \left( s^0,t \right) = - \frac{\partial E^\beta}{\partial s} \left( S^\beta \left( s^0,t \right) \right).
	\end{equation}
	with initial condition $S^\beta \left( s^0,0 \right) = s^0$.
	Differentiating Eq.~\ref{eq:proof3-1} with respect to $\beta$, we get

	\begin{equation}
		\begin{split}
			\frac{d}{dt} \frac{\partial S^\beta}{\partial \beta} \left( s^0,t \right) =
			& - \frac{\partial^2 E^\beta}{\partial s \partial \beta} \left( S^\beta \left( s^0,t \right) \right) \\
			& - \frac{\partial^2 E^\beta}{\partial s^2} \left( S^\beta \left( s^0,t \right) \right) \cdot \frac{\partial S^\beta}{\partial \beta} \left( s^0,t \right).
		\end{split}
	\end{equation}
	Evaluating at $\beta=0$ and using the fact that $S^0 \left( s^0,t \right) = s^0$, we get

	\begin{equation}
		\label{eq:proof3-2}
		\begin{split}
			\frac{d}{dt} \left. \frac{\partial S^\beta}{\partial \beta} \right|_{\beta=0} \left( s^0,t \right) =
			& - \frac{\partial C}{\partial s} \left( s^0 \right) \\
			& - \frac{\partial^2 E}{\partial s^2} \left( s^0 \right) \cdot
			\left. \frac{\partial S^\beta}{\partial \beta} \right|_{\beta=0} \left( s^0,t \right).
		\end{split}
	\end{equation}
	Since at time $t=0$ the initial state of the network $S^\beta \left( s^0,0 \right) = s^0$ is independent of $\beta$, we have
	\begin{equation}
		\label{eq:proof3-4}
		\frac{\partial S^\beta}{\partial \beta} \left( s^0,0 \right) = 0.
	\end{equation}
	Therefore, evaluating Eq.~\ref{eq:proof3-2} at $t=0$, we get the initial condition (Eq.~\ref{eq:Cauchy-1})
	\begin{equation}
		\widetilde{S}_0
		= - \left. \frac{\partial^2 S^\beta}{\partial t \partial \beta} \right|_{\beta=0} \left( s^0,0 \right)
		= \frac{\partial C}{\partial s} \left( s^0 \right).
	\end{equation}
	Moreover, differentiating Eq.~\ref{eq:proof3-2} with respect to time we get
	\begin{equation}
		\frac{d}{dt} \left. \frac{\partial^2 S^\beta}{\partial t \partial \beta} \right|_{\beta=0} \left( s^0,t \right) =
		- \frac{\partial^2 E}{\partial s^2} \left( s^0 \right) \cdot
		\left. \frac{\partial^2 S^\beta}{\partial t \partial \beta} \right|_{\beta=0} \left( s^0,t \right).
	\end{equation}
	Hence Eq.~\ref{eq:Cauchy-3}:
	\begin{equation}
		\frac{d}{dt} \widetilde{S}_t = - \frac{\partial^2 E}{\partial s^2} \left( s^0 \right) \cdot \widetilde{S}_t.
	\end{equation}

	Now we prove the result for $\widetilde{\Theta}_t$ (Eq.~\ref{eq:Cauchy-2} and Eq.~\ref{eq:Cauchy-4}).
	First we differentiate $\frac{\partial E_\theta^\beta}{\partial \theta} \left( S_\theta^\beta \left( s_\theta^0,t \right) \right)$ with respect to $\beta$:
	\begin{equation}
		\begin{split}
			\frac{d}{d\beta} \frac{\partial E_\theta^\beta}{\partial \theta} \left( S_\theta^\beta \left( s_\theta^0,t \right) \right)
			& = \frac{\partial E_\theta^\beta}{\partial \theta \partial \beta} \left( S_\theta^\beta \left( s_\theta^0,t \right) \right) \\
			& + \frac{\partial E_\theta^\beta}{\partial \theta \partial s}     \left( S_\theta^\beta \left( s_\theta^0,t \right) \right)
			\cdot \frac{\partial S_\theta^\beta}{\partial \beta} \left( s_\theta^0,t \right).
		\end{split}
	\end{equation}
	Again we evaluate at $\beta=0$ and we use the fact that $S_\theta^0 \left( s_\theta^0,t \right) = s_\theta^0$.
	We get

	\begin{equation}
		\label{eq:proof3-3}
		\begin{split}
			\left. \frac{d}{d\beta} \right|_{\beta=0} \frac{\partial E_\theta^\beta}{\partial \theta} \left( S_\theta^\beta \left( s_\theta^0,t \right) \right)
			= & \frac{\partial C_\theta}{\partial \theta} \left( s_\theta^0 \right) \\
			+ & \frac{\partial E_\theta}{\partial \theta \partial s} \left( s_\theta^0 \right)
			\cdot \left. \frac{\partial S_\theta^\beta}{\partial \beta} \right|_{\beta=0} \left( s_\theta^0,t \right).
		\end{split}
	\end{equation}
	Evaluating Eq.~\ref{eq:proof3-3} at time $t=0$ and using Eq.~\ref{eq:proof3-4} we get the initial condition (Eq.~\ref{eq:Cauchy-2})
	\begin{equation}
		\widetilde{\Theta}_0 = \left. \frac{d}{d\beta} \right|_{\beta=0} \frac{\partial E_\theta^\beta}{\partial \theta} \left( S_\theta^\beta \left( s_\theta^0,0 \right) \right) = \frac{\partial C_\theta}{\partial \theta} \left( s_\theta^0 \right).
	\end{equation}
	Moreover, differentiating Eq.~\ref{eq:proof3-3} with respect to time we get
	\begin{equation}
		\frac{d}{dt} \left. \frac{d}{d\beta} \right|_{\beta=0} \frac{\partial E_\theta^\beta}{\partial \theta} \left( S_\theta^\beta \left( s_\theta^0,t \right) \right)
		= \frac{\partial E_\theta}{\partial \theta \partial s} \left( s_\theta^0 \right)
		\cdot \left. \frac{\partial^2 S_\theta^\beta}{\partial t \partial \beta} \right|_{\beta=0} \left( s_\theta^0,t \right).
	\end{equation}
	Hence Eq.~\ref{eq:Cauchy-4}:
	\begin{equation}
		\frac{d}{dt} \widetilde{\Theta}_t
		= - \frac{\partial E_\theta}{\partial \theta \partial s} \left( s_\theta^0 \right)
		\cdot \widetilde{S}_t.
	\end{equation}
	This completes the proof.
\end{proof}

\end{document}